\newcommand{\ignore}[1]{{}}
\newcommand{\tuple}[1]{\langle #1 \rangle}
\newtheorem{thm}{Theorem}
\newtheorem{definition}{Definition}
\renewcommand{\implies}{\rightarrow}
\newcommand{\mathdefault}[1][]{}
\newtheorem*{example*}{Example}
\algrenewcommand\alglinenumber[1]{\tiny #1}
\newbox\dottedarrow@box
\newcommand*\dottedarrow
\newcommand*\dottedarrow@t[1][1.5em]
\newcommand*\dottedarrow@m[1][]
\relax\detokenize{#1}\relax
\newbox\solidarrow@box
\newcommand*\solidarrow
\newcommand*\solidarrow@t[1][1.5em]
\newcommand*\solidarrow@m[1][]
\relax\detokenize{#1}\relax
\newcommand{\pushright}[1]{\ifmeasuring@#1\else\omit\hfill$\displaystyle#1$\fi\ignorespaces}
\newcommand{\pushleft}[1]{\ifmeasuring@#1\else\omit$\displaystyle#1$\hfill\fi\ignorespaces}
\title{Platform-Aware Mission Planning\thanks{This paper has been accepted in the proceedings of the International Conference of Automated Planning and Scheduling (ICAPS) 2025.}}
\author {
    % Authors
    Stefan Panjkovic \textsuperscript{\rm 1,2},
    Alessandro Cimatti \textsuperscript{\rm 1},
    Andrea Micheli \textsuperscript{\rm 1},
    Stefano Tonetta \textsuperscript{\rm 1}
}
\begin{document}

\maketitle

\begin{abstract}
    Planning for autonomous systems typically requires reasoning with models at different levels of abstraction, and the harmonization of two competing sets of objectives: high-level mission goals that refer to an interaction of the system with the external environment, and low-level platform constraints that aim to preserve the integrity and the correct interaction of the subsystems. The complicated interplay between these two models makes it very hard to reason on the system as a whole, especially when the objective is to find plans with robustness guarantees, considering the non-deterministic behavior of the lower layers of the system.

    In this paper, we introduce the problem of Platform-Aware Mission Planning (PAMP), addressing it in the setting of temporal durative actions. The PAMP problem differs from standard temporal planning for its \emph{exists-forall} nature: the high-level plan dealing with mission goals is required to satisfy safety and executability constraints, for all the possible non-deterministic executions of the low-level model of the platform and the environment.
    We propose two approaches for solving PAMP. The first baseline approach amalgamates the mission and platform levels, while the second is based on an abstraction-refinement loop that leverages the combination of a planner and a verification engine.
    We prove the soundness and completeness of the proposed approaches and validate them experimentally, demonstrating the importance of heterogeneous modeling and the superiority of the technique based on abstraction-refinement.
\end{abstract}

\section{Introduction}

% Advantage in two separate models -> practical applications, separation of concerns
  % Efficiency

% Importance of our model: execution models vs

A commonly employed architecture to realize autonomous systems consists in using automated planning for the synthesis of plans to achieve given mission goals, which are then executed on the system's platform (and the environment the system operates in). This separation of concerns allows the planner to reason on a high-level model, disregarding details and non-deterministic behaviors of the platform.
However, in safety-critical applications, the mission objectives (also called ``science objectives'', in the space domain) often conflict with the safety constraints dictated by the platform and the environment (also called "engineering constraints").
The former are easily expressible as goals in the planning problem and are existential in nature (i.e., one needs to find any plan achieving the goals), whereas safety constraints are universally quantified over all the  possible executions of the given plan (i.e., one needs to prevent any execution violating the safety).
To represent this scenario and to offer reasoning guarantees on the autonomous system as a whole, we need to model \emph{both} the planning objectives and the possible platform behaviors, formalizing the link between these levels.

% Automated planning is a key component of an autonomous system: it allows the automatic synthesis of plans for given mission goals, reasoning on a high-level model of the system and its environment.

% Automated temporal planning is the problem of synthesizing courses of actions to achieve a desired objective given a model of a system subject to temporal constraints. In the context of autonomous systems, temporal planning is used to enable goal-driven autonomy, meaning that the system, composed of the planner and the execution platform, takes in input a high-level ``mission'' goal and has to determine a way to achieve it \cite{deliberation-survey}. In real systems, however, achieving the mission goal is not enough, we also want to \emph{guarantee} safety properties, preventing unwanted situations in the environment (e.g., avoiding low altitude flights on populated areas for a drone) or preserving the platform integrity (e.g., saving enough energy to continue operations after the mission goal is achieved). A key issue in this context is the fact that given a plan, there might be multiple executions on the platform due to internal non-determinism (e.g., the timing of some operations cannot be fully controlled) or due to the interactions with the external environment.

In this paper, we aim at finding solution plans achieving high-level mission goals and offering formal robustness guarantees on the execution of such plans on the platform. We formalize and tackle the ``Platform-Aware Mission Planning'' (PAMP) problem, which aims at finding a plan guaranteed to achieve the mission objectives, such that all the possible evolutions of the platform controlled by the plan satisfy a set of safety and executability properties, taking into account both the flexibility of the execution platform (that is, the possible choices the platform can operate while obeying a given plan) and the non-determinism from the environment.
Concretely, we propose a formal framework, in which a high-level
temporal planning representation is coupled with a low-level
description of the platform that executes the generated plans. We use
a standard temporal planning model adapted from
\citeauthor{gigante2022} (\citeyear{gigante2022}) and a timed
automaton \cite{timed_automata} to represent the platform level. This formal framework uses existing models and is
thus easily instantiated in practice; moreover, it mirrors the
architecture of real-world autonomous systems \cite{threelayerarch}.

We propose two techniques to solve the PAMP problem. First, we develop a baseline ``amalgamated'' approach grounded in Satisfiability Modulo Theory (SMT) \cite{smt}: we combine a standard encoding for temporal planning with a novel encoding for the executability and safety of a symbolic temporal plan. The resulting formula is quantified ($\exists\forall$), mirroring the intuitive quantifier alternation in the problem definition.
Second, we propose a much more efficient approach exploiting the subdivision of the framework in two layers of abstraction. The technique uses a state-of-the-art heuristic search temporal planner to generate candidate plans, which are then checked for safety and executability, explaining the conflicts as sequences of events that the planner is required to avoid for subsequent candidates. This check employs a specialized version of the amalgamated SMT encoding.
We formally prove the soundness and completeness of the proposed approaches and we develop two scalable case-studies to empirically evaluate them, showing their empirical effectiveness.

\section{Background}

% We start by giving the basic notation and concepts concerning temporal planning, timed automata, and SMT, that are needed for the rest of the paper.

\paragraph{Temporal Planning}

We start by defining the syntax of a temporal planning problem: we
adapt the formal model used by \citeauthor{gigante2022}
(\citeyear{gigante2022}), which is quite close to
%the concrete
PDDL 2.1 level 3 \cite{pddl21}.
% and is also compatible with a fragment of the ANML \cite{anml} language.

\begin{definition}[Temporal Planning Problem]
A temporal planning problem $\Pi$ is a tuple $\tuple{P, A, I, G}$,
where $P$ is a set of propositions, $A$ is a set of durative actions,
$I: P \rightarrow \{ \top, \bot \}$ is the initial state and $G \subseteq P$ is the goal
condition. A snap (instantaneous) action is a tuple $h = \tuple{\text{pre}(h), \text{eff}^+(h), \text{eff}^-(h)}$, where $\text{pre}(h) \subseteq P$ is the set of preconditions and $\text{eff}^+(h), \text{eff}^-(h) \subseteq P$ are two disjoint sets of propositions, called the positive and negative effects of $h$, respectively. We write $\text{eff}(h)$ for $\text{eff}^+(h) \cup \text{eff}^-(h)$. A durative action $a \in A$ is a tuple $\tuple{a_\vdash, a_\dashv, \text{pre}^{\leftrightarrow}(a), [L_a, U_a]}$, where $a_\vdash$ and $a_\dashv$ are the start and end snap actions, respectively, $\text{pre}^{\leftrightarrow}(a) \subseteq P$ is the over-all condition, and $L_a \in \mathbb{Q}_{>0}$ and $U_a \in \mathbb{Q}_{>0} \cup \{ \infty \}$ are the bounds on the action duration.
\end{definition}

A (time-triggered) plan is defined as a set of triples, each specifying an action, its starting time and its duration.

\begin{definition}[Plan]
Let $\Pi = \tuple{P, A, I, G}$ be a temporal planning problem. A plan for $\Pi$ is a set of tuples $\pi = \{ \tuple{a_1, t_1, d_1}, \ldots, \tuple{a_n, t_n, d_n} \}$, where, for each $1 \leq i \leq n$, $a_i \in A$ is a durative action, $t_i \in \mathbb{Q}_{\geq 0}$ is its start time, and $d_i \in \mathbb{Q}_{>0}$ is its duration.
\end{definition}

We will call \emph{length} of a time-triggered plan $\pi$ (denoted with $\vert \pi \vert$) the number of snap actions in $\pi$ (i.e. twice the number of durative actions).

A time-triggered plan $\pi$ is a solution plan for the problem $\Pi$
if, starting from the initial state $I$, each durative action in the plan can be applied at the specified
time with the given duration (the preconditions of its start and end
snap actions are true at the start and at the end of the action
respectively), and if by applying all the effects a final state is
reached after the end of the last action in which the goal condition
is satisfied. The formal semantics is presented in \cite{gigante2022}, which we omit here for the sake of brevity.

We assume a semantics without self-overlapping of actions \cite{gigante2022}, which makes the temporal planning problem decidable: it is not possible for two instances of the same ground action to overlap in time.

\begin{definition}[Action self-overlapping]
A plan $\{ \tuple{a_1, t_1, d_1}, \ldots, \tuple{a_n, t_n, d_n} \}$ is without self-overlapping if there exist no $i, j \in \{ 1, \ldots, n \}$ such that $a_i = a_j$ and $t_i \leq t_j < t_i + d_i$.
\end{definition}

This formal model of temporal planning is simplified with respect to concrete planning languages (e.g. for the sake of simplicity we only defined the ground model, while most languages allow a first-order lifted representation for compactness), but it already achieves the full computational complexity of very expressive languages such as ANML \cite{gigante_kr}.

\paragraph{Timed Automata}

Here, we recall the standard definitions.

\begin{definition}[Clock constraints]
Let $\mathcal{X}$ be a finite set of elements called clocks. A clock constraint is a conjunctive formula of atomic constraints of the form $x \sim n$ or $x-y \sim n$, where $x, y \in \mathcal{X}$, $\sim \in \{ \leq, <, =, >, \geq \}$ and $n \in \mathbb{N}$. We use $\mathcal{C}(\mathcal{X})$ to denote the set of clock constraints on $\mathcal{X}$.
\end{definition}

A Timed Automaton generalizes finite-state automata by means of clock variables that can be reset and track the advancement of time.

\begin{definition}[Timed Automaton]
A Timed Automaton (TA) is a tuple $\mathcal{T} = \tuple{\Sigma, \mathcal{L}, l_0, \mathcal{X}, \Delta, \text{Inv}}$, where:
\begin{itemize}
    \item $\Sigma$ is the alphabet;
    \item $\mathcal{L}$ is a finite set of locations;
    \item $l_0 \in \mathcal{L}$ is the initial location;
    \item $\mathcal{X}$ is a finite set of clocks;
    \item $\Delta \subseteq \mathcal{L} \times \mathcal{C}(\mathcal{X}) \times \Sigma \times 2^\mathcal{X} \times \mathcal{L}$ is the transition relation;
    \item $\text{Inv}: \mathcal{L} \rightarrow \mathcal{C}(\mathcal{X})$ maps each location to its invariant.
\end{itemize}
We will write $l \xrightarrow{g, a, r} l^\prime$ when $\tuple{l, g, a, r, l^\prime} \in \Delta$.
\end{definition}

\begin{definition}[State of TAs]
Given a TA $\mathcal{T} = \tuple{\Sigma, \mathcal{L}, l_0, \mathcal{X}, \Delta, \text{Inv}}$, a state of $\mathcal{T}$ is a pair $\tuple{l, u}$, where $l \in \mathcal{L}$ and $u: \mathcal{X} \rightarrow \mathbb{R}_{\geq 0}$ is a clock assignment.
\end{definition}

\noindent
We use $u \models g$ to mean that the clock values denoted by $u$ satisfy the guard $g \in \mathcal{C}(\mathcal{X})$. For $d \in \mathbb{R}_{\geq 0}$, we use $u+d$ to denote the clock assignment that maps all clocks $c \in \mathcal{X}$ to $u(c)+d$. For $r \subseteq \mathcal{X}$, we use $[r \rightarrow 0]u$ to denote the clock assignment that maps all $c \in r$ to 0, and all $c \in \mathcal{X}\! \setminus r$ to $u(c)$.

\begin{definition}[Semantics of TAs]
The semantics of a TA is defined in terms of a transition system, with states of the form $\tuple{l, u}$ and transitions defined by the following rules:
\begin{itemize}
    \item $\tuple{l, u} \xrightarrow{d} \tuple{l, u+d}$ if $u \in \text{Inv}(l)$ and $(u+d) \in \text{Inv}(l)$, for $d \in \mathbb{R}_{\geq 0}$;
    \item $\tuple{l, u} \xrightarrow{a} \tuple{l^\prime, u^\prime}$ if $l \xrightarrow{g, a, r} l^\prime$, $u \models g$, $u^\prime = [r \rightarrow 0] u$ and $u^\prime \in \text{Inv}(l^\prime)$.
\end{itemize}
\end{definition}

\begin{definition}[Timed trace]
Let $\mathcal{T} = \tuple{\Sigma, \mathcal{L}, l_0, \mathcal{X}, \Delta, \text{Inv}}$ be a TA. A timed action is a pair $\tuple{t, a}$, where $t \in \mathbb{R}_{\geq 0}$ and $a \in \Sigma$. A timed trace is a (possibly infinite) sequence of timed actions $\xi = \tuple{\tuple{t_1, a_1}, \tuple{t_2, a_2}, \ldots, \tuple{t_i, a_i}, \ldots}$, where $t_i \leq t_{i+1}$ for all $i \geq 1$.
\end{definition}

\begin{definition}[Run of a TA]
The run of a TA $\mathcal{T} = \tuple{\Sigma, \mathcal{L}, l_0,
  \mathcal{X}, \Delta, \text{Inv}}$ with initial state $\tuple{l_0,
  u_0}$ over a timed trace $\xi = \tuple{\tuple{t_1, a_1}, \tuple{t_2,
    a_2}, \ldots}$ is the sequence of transitions
%\begin{align*}
$\tuple{l_0, u_0} \xrightarrow{d_1} \xrightarrow{a_1} \tuple{l_1, u_1} \xrightarrow{d_2} \xrightarrow{a_2} \tuple{l_2, u_2} \ldots$,
%\end{align*}
where $d_1 = t_1$ and $d_i = t_i - t_{i-1}$ for all $i \geq 2$.
\end{definition}

\begin{figure*}[tb]
  \begin{subfigure}[b]{0.5\textwidth}
    \resizebox{\columnwidth}{!}{\begin{tikzpicture}[shorten >=1pt,node distance=4.5cm,on grid,>={Stealth[round]},
    every state/.style={draw=blue!50,very thick,fill=blue!20,minimum size=2cm,align=center,text width=2cm}]

    \node[state,initial] (N1) {\textsc{off}};
    \node[state] (N2) [below=of N1] {\textsc{p\_started}};
    \node[state] (N25) [right=of N2] {\textsc{w\_starting} \\ $c_W \leq 2$};
    \node[state] (N3) [right=of N25] {\textsc{w\_started} \\ $c_W \leq 20$};
    \node[state] (N35) [above=of N3] {\textsc{w\_resuming} \\ $c_W \leq 2$};
    \node[state] (N4) [right=of N3] {\textsc{w\_ended}};
    \node[state] (N6) [right=of N4] {\textsc{p\_ended}};
    \node[state] (N5) [above=of N6] {\textsc{bad}};
    \node[state] (N7) [above=of N4] {\textsc{c\_started} \\ $c_C \leq 2$};

    \path[->] (N1) edge node [left,align=center,text width=1.5cm] {$P_\vdash$ \\ $c_P \coloneqq 0$} (N2)
              (N2) edge node [above] {$W_\vdash$} node [below] {$c_W \coloneqq 0$} (N25)
              (N25) edge node [above] {$\tau$} (N3)
              (N3) edge node [below,align=center,text width=2cm] {$W_\dashv$ \\ $c_W == 20$ \\ $c \coloneqq 0$} (N4)
              (N4) edge node [below,align=center,text width=1.5cm] {$W_\vdash$ \\ $c \geq 10$ \\ $c_W \coloneqq 0$} (N35)
              (N35) edge node [left] {$\tau$} (N3)
              (N4) edge node [right,align=center,text width=1.5cm] {$\tau$ \\ $c_P > 50$} (N5)
              (N4) edge node [below,align=center,text width=1.5cm] {$P_\dashv$} (N6)
              (N4) edge node [right,align=center,text width=1.5cm] {$C_\vdash$ \\ $c_C \coloneqq 0$} (N7)
              (N7) edge [bend right=45] node [below=0.3cm,align=center,text width=2cm] {$C_\dashv$ \\ $c_C == 2$} (N2)
              (N5) edge node [right] {$P_\dashv$} (N6);
\end{tikzpicture}}
    \caption{\label{fig:factory_ta}}
  \end{subfigure}
  \hfill
  \begin{subfigure}[b]{0.42\textwidth}
    \resizebox{\columnwidth}{!}{\definecolor{dark-gray}{gray}{0.5}
\begin{tikzpicture}[>=latex, very thick]
    \node [anchor=west, draw=black, rectangle, text width=4cm, align=center] (w1) {Work [20]};
    \node [draw=dark-gray, rectangle, dashed, text width=5cm, align=center, right=0.4cm of w1] (c) {distance $\geq 10$};
    \node [draw=black, rectangle, text width=4cm, align=center, right=0.4cm of c] (w2) {Work [20]};
    \node [draw=black, rectangle, text width=15cm, align=center, below left=0.4cm and 0.3cm of w1, anchor=north west] (p) {Processing [55]};

    \draw [-latex] ($ (p.south west) + (-0.3, -0.4) $) -- ($ (p.south east) + (1, -0.4) $) node [anchor=north] {t};

    \draw [dotted] (p.south west) -- ($ (p.south west) + (0, -0.4) $) node [anchor=north] {$0$};
    \draw [dotted] (p.south east) -- ($ (p.south east) + (0, -0.4) $) node [anchor=north] {$55$};

    \draw [-latex, red] (p.north west) -- (w1.west);
    \draw [-latex, red] (w1.east) -- (c.west);
    \draw [-latex, red] (c.east) -- (w2.west);
    \draw [-latex, red] (w2.east) -- (p.north east);
    %\draw [-latex, red] (w1.east) -- [loop above] (w2.west);

\begin{scope}[yshift=-3cm]
    \node [anchor=west, draw=black, rectangle, text width=4cm, align=center] (w1) {Work [20]};
    \node [draw=black, rectangle, text width=4cm, align=center, right=3cm of w1] (w2) {Work [20]};
    \node [draw=black, rectangle, text width=12cm, align=center, below left=0.4cm and 0.3cm of w1, anchor=north west] (p) {Processing [45]};

    \draw [-latex] ($ (p.south west) + (-0.3, -0.4) $) -- ($ (p.south east) + (4, -0.4) $) node [anchor=north] {t};

    \draw [dotted] (p.south west) -- ($ (p.south west) + (0, -0.4) $) node [anchor=north] {$0$};
    \draw [dotted] (p.south east) -- ($ (p.south east) + (0, -0.4) $) node [anchor=north] {$45$};

    \draw [-latex, red] (p.north west) -- (w1.west);
    \draw [-latex, red] (w1.east) -- (w2.west);
    \draw [-latex, red] (w2.east) -- (p.north east);
\end{scope}

\begin{scope}[yshift=-6cm]
    \node [anchor=west, draw=black, rectangle, text width=4cm, align=center] (w1) {Work [20]};
    \node [draw=black, rectangle, text width=2cm, align=center, right=0.4cm of w1] (c) {Cool [2]};
    \node [draw=black, rectangle, text width=4cm, align=center, right=0.4cm of c] (w2) {Work [20]};
    \node [draw=black, rectangle, text width=12cm, align=center, below left=0.4cm and 0.3cm of w1, anchor=north west] (p) {Processing [45]};

    \draw [-latex] ($ (p.south west) + (-0.3, -0.4) $) -- ($ (p.south east) + (4, -0.4) $) node [anchor=north] {t};

    \draw [dotted] (p.south west) -- ($ (p.south west) + (0, -0.4) $) node [anchor=north] {$0$};
    \draw [dotted] (p.south east) -- ($ (p.south east) + (0, -0.4) $) node [anchor=north] {$45$};

    \draw [-latex, red] (p.north west) -- (w1.west);
    \draw [-latex, red] (w1.east) -- (c.west);
    \draw [-latex, red] (c.east) -- (w2.west);
    \draw [-latex, red] (w2.east) -- (p.north east);
    %\draw [-latex, red] (w1.east) -- [loop above] (w2.west);
\end{scope}

\draw (-1, -1) node [left] {\scalebox{3}{$\pi_1$}};
\draw (-1, -4) node [left] {\scalebox{3}{$\pi_2$}};
\draw (-1, -7) node [left] {\scalebox{3}{$\pi_3$}};

\end{tikzpicture}}
    \caption{\label{fig:factory_plans}}
  \end{subfigure}
  \caption{Running example TA platform model (\subref{fig:factory_ta}) and example plans (\subref{fig:factory_plans}). The first two plans violate safety ($\pi_1$) and executability ($\pi_2$) constraints, while the third one ($\pi_3$) is correct for all platform executions.}
  \label{fig:running-example}
\end{figure*}

\section{Problem definition}

In this section, we formalize our composite framework and the Platform-Aware Mission Planning (PAMP) problem.
%: we describe the interface between the two layers, and define the notions of executability and safety of a time-triggered plan on a TA. Finally, we define the .

In our framework, we consider an autonomous system architecture with
two layers of abstraction: a \emph{planning layer}, represented as
a temporal planning problem, describing the high-level
durative actions and a mission goal; and a
\emph{platform layer}, represented as a TA,
%(or a network of timed automata),
which describes the low-level details and internal actions of the platform that is controlled by the planner. We consider an interface between the two layers where each start and end event of an action of the planning problem is associated with a signal of the TA (a letter of its alphabet), and define the execution of a time-triggered plan by synchronizing the action start/end commands of the plan with transitions of the platform labeled with the corresponding events. In the time between two high-level commands, the platform can freely evolve by performing internal transitions and advancing time.

\begin{example*}
  Figures \ref{fig:factory_ta} and \ref{fig:factory_plans} show a small running example of the considered framework. An industrial process needs to be completed by starting a "Process" (P) action and applying in parallel two "Work" (W) actions. In the planning model, we have a Boolean variable \emph{processing} (initially false) and a bounded integer variable \emph{completed-steps} (initially 0).\footnote{For simplicity, we use a bounded integer variable to count the number of completed "Work" actions. Such a variable can be compiled in our planning model using unary or binary encodings.}
  The "Process" action sets the \emph{processing} variable to true at the start, and sets it back to false at the end. The "Work" action has an over-all condition requiring the \emph{processing} variable to be true during the entire duration of the action, and an end effect that increments the value of the \emph{completed-steps} variable by 1. There is also a "Cooldown" action, which does not have any effect on the variables of the planning model. The goal requires that the "Work" action is applied two times (i.e. $\text{\emph{completed-steps}} == 2$).
  At the platform layer, modeled by the TA shown in \cref{fig:factory_ta}, there are transitions with labels corresponding to the start/end events of the high-level durative actions (e.g. $W_\vdash$ corresponds to the start of the "Work" action), and internal transitions that the platform can take, which are not linked to high-level events (the transitions with label $\tau$ in this example). The TA encodes a low-level constraint between successive applications of the "Work" actions that is not modeled in the planning problem: when a "Work" action is performed (reaching the \textsc{w\_ended} location), a component becomes heated and needs to cool down before the next "Work" action can be applied, and this occurs either by waiting 10 time units (transition from \textsc{w\_ended} to \textsc{w\_resuming} with guard $c \geq 10$), or by explicitly applying a "Cooldown" action which cools the component after 2 time units (transitions to \textsc{c\_started} and \textsc{p\_started} with labels $C_\vdash$ and $C_\dashv$). Moreover, the process has a deadline of 50 time units, after which the platform can reach an undesirable state (transition from \textsc{w\_ended} to \textsc{bad} with guard $c_P > 50$).
  % \cref{fig:factory_plans} shows 3 examples of high-level plans: in the first plan, the "Processing" action is applied with a duration of 55, which allows the two "Work" actions to be applied with the required distance between them, but the deadline of 50 is surpassed, which may bring the system to a bad state (the plan is not safe for all platform traces); in the second plan, the "Processing" action is applied with a duration of 45, which is within the deadline of 50, but the "Work" actions are too close and the constraints of the platform prohibit such a plan (the plan is not executable for all platform traces); in the third plan, the "Processing" action is applied with a correct duration, and the "Cooldown" action is applied between the two "Work" actions, which makes them applicable (the plan is safe and executable for all platform traces).
\end{example*}

  We start by introducing the notion of states that are reachable by executing a plan on a TA. Intuitively, the states that can be reached by executing a sequence of timed snap actions $\rho$ on a TA $\mathcal{T}$, are all the states that belong to a run of $\mathcal{T}$ where all and only the snap actions in $\rho$ are applied, by taking the corresponding transitions at the times specified in $\rho$. We formally define this with a function $H$, that maps the snap actions of $\rho$ with steps in the run of $\mathcal{T}$ where the transitions with the corresponding labels are taken.

\begin{definition}[States reachable by plan execution]\label{def:reachable}
Let $\Pi = \tuple{P, A, I, G}$ be a temporal planning problem, and let $\mathcal{T} = \tuple{\Sigma, \mathcal{L}, l_0, \mathcal{X}, \Delta, \text{Inv}}$ be a TA such that $\tau^{a_\vdash}, \tau^{a_\dashv} \in \Sigma$, for all actions $a \in A$. Let $\rho = \tuple{(t_1, e_1), \ldots, (t_n, e_n)}$ be a (possibly empty) ordered sequence of timed snap actions of $\Pi$, where $t_i < t_{i+1}$ for all $i \in \{ 1, \ldots, n-1 \}$. A state $r_s$ is "reachable" by executing $\rho$ on $\mathcal{T}$ from the initial state $r_0 = \tuple{l_0, u_0}$ if and only if there exists a run $r_0 \xrightarrow{d_1} \xrightarrow{\sigma_1} \ldots \xrightarrow{d_k} \xrightarrow{\sigma_k} r_k$, with $0 \leq s \leq k$, and an injective function $H : \{ 0, 1, \ldots, n \} \rightarrow \{ 0, 1, \ldots, k \}$ with the following properties:
\begin{enumerate}
  \item $H(0) = 0$ (Required to handle the case with $\rho = \tuple{}$);
  \item for all $i \in \{ 1, \ldots, n \}$, for all $j \in \{ 1, \ldots, k \}$, if $H(i) = j$ then $\tau^{e_i} = \sigma_j$ and $t_i = \sum_{l=1}^j d_l$;
  \item for all $j \in \{ 1, \ldots, k \}$, if $j \not\in \text{Im}(H)$ then for all $e \in \{ a_\vdash, a_\dashv : a \in A \}$, $\sigma_j \neq \tau^e$.
\end{enumerate}
\end{definition}

We define analogously the set of states that are reachable
\emph{after} executing $\rho$ on $\mathcal{T}$ from the initial state
$r_0$, denoted by $\text{ReachableAfter}_\mathcal{T}(r_0, \rho)$
(while in the previous definition we consider all states $r_s$ along
the run, here we include in the set only the final state $r_k$).

In the running example, consider the sequence $\rho = \tuple{(0, P_\vdash), (1, W_\vdash), (21, W_\dashv)}$. Then we have that
\begin{align*}
  &\text{Reachable}_\mathcal{T}(\textsc{off}, \rho) = &&\{ \textsc{off}, \textsc{p\_started}, \textsc{w\_starting},\\
  & &&\textsc{w\_started}, \textsc{w\_ended}, \textsc{bad} \}\\
  &\text{ReachableAfter}_\mathcal{T}(\textsc{off}, &&\rho) = \{ \textsc{w\_ended}, \textsc{bad} \}
\end{align*}
For instance, $\textsc{bad} \in \text{ReachableAfter}_\mathcal{T}(\textsc{off}, \rho)$ since there exists the run $\textsc{off} \xrightarrow{d_1 = 0} \xrightarrow{\sigma_1 = P_\vdash} \textsc{p\_started} \xrightarrow{d_2 = 1} \xrightarrow{\sigma_2 = W_\vdash} \textsc{w\_starting} \xrightarrow{d_3 = 2} \xrightarrow{\sigma_3 = \tau} \textsc{w\_started} \xrightarrow{d_4 = 18} \xrightarrow{\sigma_4 = W_\dashv} \textsc{w\_ended} \xrightarrow{d_5 = 31} \xrightarrow{\sigma_5 = \tau} \textsc{bad}$ and the function $H$ s.t. $H(1) \!=\! 1$, $H(2) \!=\! 2$ and $H(3) \!=\! 4$, satisfying \cref{def:reachable}.

Next, we formalize the notion of executability of a time-triggered plan on the platform. Intuitively, we say that a time-triggered plan is executable on a TA if every snap action of the plan is applicable at the prescribed time, for any possible internal behavior of the platform, assuming that the platform applied all the previous commands of the plan. A snap action is applicable if a corresponding transition can be taken at the time specified in the plan.

Formally, given a state $\tuple{l, u}$ of $\mathcal{T}$, a snap action $a_{\vdash / \dashv}$ is \emph{applicable} in $\tuple{l, u}$ if and only if there exists a transition $l \xrightarrow{g, \tau^{a_{\vdash / \dashv}}, r} l^\prime$ such that $u \models g$ and $[r \rightarrow 0]u \in \text{Inv}(l^\prime)$.

For a time-triggered plan $\pi = \{ \tuple{a_1, t_1, d_1}, \ldots, \tuple{a_n, t_n, d_n} \}$, we indicate with $\rho^\pi = \tuple{(t_1^\prime, e_1), \ldots, (t_{2n}^\prime, e_{2n})}$ the ordered sequence of timed snap actions of $\pi$, with $t_i^\prime < t_{i+1}^\prime$ for all $i \in \{ 1, \ldots, 2n-1 \}$. For simplicity, we assume that all the valid plans of the considered planning problems do not contain simultaneous events, i.e. snap actions scheduled at the same time: since the semantics of TA is super-dense (multiple discrete steps can be taken at the same time in a specific order), in order to properly define and check the executability of a plan with simultaneous events for all platform behaviors, all the possible orderings for the sets of simultaneous events would need to be considered.
Given a sequence of timed snap actions $\rho = \tuple{(t_1, e_1), \ldots, (t_n, e_n)}$, we denote with $\rho_i = \tuple{(t_1, e_1), \ldots, (t_i, e_i)}$ the prefix obtained by considering the first $i \leq n$ timed snap actions. We denote with $\rho_0 = \tuple{}$ the empty sequence.

\begin{definition}[Time-triggered plan executability on TA]\label{def:executability}
Let $\Pi$ be a temporal planning problem and let $\mathcal{T}$ be a TA with initial state $r_0 = \tuple{l_0, u_0}$. Suppose that $\mathcal{T}$ has a global clock $\gamma$ that is not reset in any transition and has value 0 in the initial state. An ordered sequence of timed snap actions $\rho = \tuple{(t_1, e_1), \ldots, (t_n, e_n)}$ is executable on $\mathcal{T}$ if and only if for all $i \in \{ 0, \ldots, n-1 \}$, for all $r = \tuple{l, u} \in \text{ReachableAfter}_\mathcal{T}(r_0, \rho_i)$, if $u(\gamma) \!=\! t_{i+1}$ then $e_{i+1}$ is applicable in $r$. A time-triggered plan $\pi$ of $\Pi$ is executable on $\mathcal{T}$ if its sequence of timed snap actions $\rho^\pi$ is executable on $\mathcal{T}$.
\end{definition}

For example, the sequence $\rho = \{ (0, P_\vdash), (1, W_\vdash), (21, W_\dashv), (22, W_\vdash), (42, W_\dashv), (45, P_\dashv) \}$, which corresponds to the second plan in \cref{fig:factory_plans} is not executable on the TA of \cref{fig:factory_ta}, because it is possible to reach location $\textsc{w\_ended}$ with $\gamma = 22$ and $c = 1$ (this state belongs to $\text{ReachableAfter}_\mathcal{T}(r_0, \rho_3)$) and the transition with label $W_\vdash$ is not applicable (the guard $c \geq 10$ is false).

We formalize the notion of safety for a plan w.r.t a TA, given a set of bad states $B$, by requiring that all the states that can be reached by executing $\rho^\pi = \tuple{(t_1, e_1), \ldots, (t_n, e_n)}$, within time $t_n$, do not belong to $B$.

\begin{definition}[Plan safety w.r.t. TA]\label{def:safety}
Let $\Pi$ be a temporal planning problem and let $\mathcal{T}$ be a TA with initial state $r_0 = \tuple{l_0, u_0}$. Suppose that $\mathcal{T}$ has a global clock $\gamma$ that is not reset in any transition and has value 0 in the initial state. Let $B \subseteq \mathcal{L} \times \mathbb{R}^\mathcal{X}$ be a set of bad states for $\mathcal{T}$. An ordered sequence of timed snap actions $\rho = \tuple{(t_1, e_1), \ldots, (t_n, e_n)}$ is B-safe w.r.t. $\mathcal{T}$ if and only if for all states $r = \tuple{l, u} \in \text{Reachable}_\mathcal{T}(r_0, \rho)$ such that $u(\gamma) \leq t_n$, $r \notin B$. A time-triggered plan $\pi$ of $\Pi$ is B-safe w.r.t. $\mathcal{T}$ if its sequence of timed snap actions $\rho^\pi$ is B-safe w.r.t. $\mathcal{T}$.
\end{definition}

Consider the running example, and suppose that $B$ is the set of all states with location $\textsc{bad}$. The sequence $\rho = \{ (0, P_\vdash), (1, W_\vdash), (21, W_\dashv), (32, W_\vdash), (52, W_\dashv), (55, P_\dashv) \}$, which corresponds to the first plan in \cref{fig:factory_plans}, is not B-safe, because it is possible to reach location $\textsc{bad}$ with $\gamma = 53$ between the application of the last two snap actions $(52, W_\dashv)$ and $(55, P_\dashv)$ (this state belongs to $\text{Reachable}_\mathcal{T}(r_0, \rho)$).

We can now formally define the PAMP problem, where the objective is to find a solution plan for the planning problem, such that it is safe and executable for all the platform traces that are compliant with the plan.

\begin{definition}[PAMP]\label{def:platform_planning}
A Platform-Aware Mission Planning (PAMP) problem is a tuple $\Upsilon = \tuple{\Pi, \mathcal{T}, B}$, where $\Pi$ is a temporal planning problem, $\mathcal{T}$ is a TA, and $B \subseteq \mathcal{L} \times \mathbb{R}^\mathcal{X}$ is a set of bad states for $\mathcal{T}$. A solution for $\Upsilon$ is a plan $\pi$ such that:
(i) $\pi$ is a valid solution plan for $\Pi$;
(ii) $\pi$ is executable on $\mathcal{T}$;
(iii) $\pi$ is B-safe w.r.t. $\mathcal{T}$.
\end{definition}

The third plan in \cref{fig:factory_plans} is a solution for the example PAMP problem. The application of the "Cool" action between the two "Work" actions makes it fully executable and safe, since the $\textsc{bad}$ is unreachable ($c_P > 50$ remains false).

\section{Solution approaches}
\label{sec:solutions}

In this section, we propose two approaches for solving the PAMP problem.
%
%The first technique combines an SMT encoding of the temporal planning problem with a universally quantified formula, which encodes the required executability and safety properties and checks whether they are satisfied in all platform traces. The second approach relies on an abstraction-refinement loop: a temporal planner searches for solution plans reaching the high-level goal, considering only the requirements of the planning problem, and then the generated plans are validated using an SMT-based checker. The validation step can return ``explanations'' to the planner in the form of bad plan prefixes, which are guaranteed to violate either the executability or the safety property; this informs the planner about low-level constraints that are not in the high-level model, and can be used for replanning.
%
We assume that a constant $k$ is given, which represents the maximum possible ratio between the length of a platform trace and the length of the executed plan. Hence, when considering a plan $\pi$ of length $L$ (the number of snap actions in the plan), we will analyze its safety and executability for platform traces of length up to $\kappa L$. It is reasonable to assume that such a constant exists and that it can be computed for a platform, as plans have a finite duration and in most practical systems only a finite number of transitions can be taken in a given time.

\paragraph{Encoding-based approach}

We will now describe our SMT encoding of the PAMP problem (\cref{fig:enc}). Consider a temporal planning
problem $\Pi$, a timed automaton $\mathcal{T}$ modeling the platform, a set of \emph{bad} states $B$, and a bound $h$ on the length of the plan. We assume that $\mathcal{T}$ contains a global clock $\gamma$ with initial value 0, that is never reset in any transition. The encoding represents two distinct traces: a \emph{plan trace} with $h$ timed steps, and a \emph{platform trace} with $\kappa h$ timed steps. In each step of a plan trace at most one snap action can be applied (as we discussed in the previous section).

\begin{figure*}[t]
    \centering
    \resizebox{.95\textwidth}{!}{
    \begin{boxedminipage}{18cm}
    \begin{align*}
      \small
      & \Phi_h : && \exists \vec{t}, \vec{a}, \vec{d}.  \textsc{PlanValid}_\Pi \left( \vec{t}, \vec{a}, \vec{d}, h \right) \land \forall \vec{l}, \vec{c}. &&  \bigwedge\limits_{i = 0}^{h-1} \left( \textsc{TraceValid}_\mathcal{T} \left( \vec{l}, \vec{c}, h, \kappa \right) \land \textsc{compliant}_\mathcal{T} \left( \vec{t}, \vec{a}, \vec{d}, \vec{l}, \vec{c}, i, \kappa \right) \implies \textsc{applicable}_\mathcal{T} \left( \vec{t}, \vec{a}, \vec{d}, \vec{l}, \vec{c}, i+1, \kappa \right) \right) \land \\
      & && && \left( \textsc{TraceValid}_\mathcal{T} \left( \vec{l}, \vec{c}, h, \kappa \right) \land \textsc{compliant}_\mathcal{T} \left( \vec{t}, \vec{a}, \vec{d}, \vec{l}, \vec{c}, h, \kappa \right) \implies \textsc{safety}_\mathcal{T} \left( \vec{l}, \vec{c}, B, h, \kappa \right) \right)
      % & && \left( \textsc{bmc}_\mathcal{T}(\vec{x}, h, \kappa) \implies \left( \bigwedge\limits_{i = 1}^{h} \left( \textsc{compliant}(\vec{t}, \vec{a}, \vec{d}, \vec{x}, i-1, \kappa) \implies \right. \right. \right. \\
      % & && \left( \bigwedge\limits_{a \in A} \left( a_i \implies \bigwedge\limits_{j = 1}^{h_2} \left( g_j = t_i \implies \textsc{applicable}_\mathcal{T}(\tau^a_\vdash, \vec{x}, j) \right) \right) \right.  \\
      % & && \left. \left. \left. \left. \land \bigwedge\limits_{a \in A} \bigwedge\limits_{k = 1}^{i-1} \left( a_k \land t_k + d_k^a = t_i \implies \bigwedge\limits_{j = 1}^{h_2} \left( g_j = t_i \implies \textsc{applicable}_\mathcal{T}(\tau^a_\dashv, \vec{x}, j) \right) \right) \right) \right) \right) \right) \\
    \end{align*}
    \vspace{-5mm}
    \begin{align*}
        % & \text{NO-SIMULTANEITY}_\Pi(\vec{f}, \vec{t}, \vec{a}, \vec{d}): && \bigwedge\limits_{a \in A} \bigwedge\limits_{\substack{a^\prime \in A: \\ a^\prime \neq a}} \bigwedge\limits_{i = 1}^{h_1} \left( \left( \neg \left( a_i \land a_i^\prime \right) \right) \land \left( \bigwedge\limits_{j^\prime = 1}^{i-1} \neg \left( a_i \land a_{j^\prime}^\prime \land t_{j^\prime} + d_{j^\prime}^{a^\prime} = t_i \right) \right) \land \right. \\
        % & && \left. \left( \bigwedge\limits_{j^\prime = 1}^{h_1} \neg \left( a_i \land a_{j^\prime}^\prime \land t_i + d_i^a = t_{j^\prime} + d_{j^\prime}^{a^\prime} \right) \right) \right) \\ \\
        & \textsc{TraceValid}_\mathcal{T} \left( \vec{l}, \vec{c}, h, \kappa \right): && \textsc{Init}_\mathcal{T} \left( \vec{l}, \vec{c}, 1 \right) \land \bigwedge\limits_{i = 2}^{\kappa h} \textsc{Trans}_\mathcal{T} \left( \vec{l}, \vec{c}, i-1, i \right) \\
        & \textsc{compliant}_\mathcal{T} \left( \vec{t}, \vec{a}, \vec{d}, \vec{l}, \vec{c}, h, \kappa \right): && \bigwedge\limits_{a \in A} \bigwedge\limits_{i = 1}^h \left( a_i \implies \bigvee\limits_{j = 1}^{\kappa h} \left( \tau_j^{a_\vdash} \land \gamma_j = t_i \land \bigwedge\limits_{\substack{j^\prime \in \{ 1, \ldots, \kappa h \} \\ j^\prime \neq j}} \left( \gamma_{j^\prime} = t_i \implies \neg \tau_{j^\prime}^{a_\vdash} \right) \right) \right)\\
        & && \land \bigwedge\limits_{a \in A} \bigwedge\limits_{s = 1}^h \bigwedge\limits_{i = s+1}^h \left( \left( a_s \land t_s + d_s^a = t_i \right) \implies \bigvee\limits_{j = 1}^{\kappa h} \left( \tau_j^{a_\dashv} \land \gamma_j = t_i \land \bigwedge\limits_{\substack{j^\prime \in \{ 1, \ldots, \kappa h \} \\ j^\prime \neq j}} \left( \gamma_{j^\prime} = t_i \implies \neg \tau_{j^\prime}^{a_\dashv} \right) \right) \right) \land \\
        & && \bigwedge\limits_{a \in A} \bigwedge\limits_{i = 1}^{\kappa h} \left( \tau_i^{a_\vdash} \implies \bigvee\limits_{j = 1}^{h} \left( a_j \land t_j = \gamma_i \right) \right) \land \bigwedge\limits_{a \in A} \bigwedge\limits_{i = 1}^{\kappa h} \left( \tau_i^{a_\dashv} \implies \bigvee\limits_{s = 1}^h \bigvee\limits_{j = s+1}^h \left( a_s \land t_s + d_s^a = t_j \land t_j = \gamma_i \right) \right) \\
        & \textsc{applicable}_\mathcal{T} \left( \vec{t}, \vec{a}, \vec{d}, \vec{l}, \vec{c}, h, \kappa \right): && \bigwedge\limits_{a \in A} \left( a_h \implies \bigwedge\limits_{i = 1}^{\kappa h} \left( \left( \gamma_i = t_h \land \bigwedge\limits_{j=1}^{i-1} \left( \gamma_j < t_h \lor \neg \tau_j^{a_\vdash} \right) \right) \implies \bigvee\limits_{\delta = \tuple{l_i, g, \tau^{a_\vdash}, r, l_i^\prime} \in \Delta} \textsc{enabled}(\vec{l}, \vec{c}, \delta, i) \right) \right) \land \\
        & && \bigwedge\limits_{a \in A} \bigwedge\limits_{s = 1}^{h-1} \left( a_s \land t_s + d_s^a = t_h \implies \bigwedge\limits_{i = 1}^{\kappa h} \left( \left( \gamma_i = t_h \land \bigwedge\limits_{j=1}^{i-1} \left( \gamma_j < t_h \lor \neg \tau_j^{a_\dashv} \right) \right) \implies \bigvee\limits_{\delta = \tuple{l_i, g, \tau^{a_\dashv}, r, l_i^\prime} \in \Delta} \textsc{enabled}(\vec{l}, \vec{c}, \delta, i) \right) \right) \\
        & \textsc{safety}_\mathcal{T} \left( \vec{l}, \vec{c}, B, h, \kappa \right): && \bigwedge\limits_{i = 1}^{\kappa h} \left( \gamma_i \leq t_h \implies \neg \textsc{bad} \left( \vec{l}, \vec{c}, B, i \right) \right)
    \end{align*}
    \end{boxedminipage}
    }
    \caption{\label{fig:enc} The bounded encoding of the platform-aware planning problem.}
    \end{figure*}

We start by defining the variables of our encoding. For every step $i \in \{ 1, \ldots, h \}$ of the plan trace, we use the real variable $t_i$ to denote the time associated to step $i$; for every action $a$, we use the Boolean variable $a_i$ to denote whether the action $a$ is started at step $i$, and the real variable $d_i^a$ to represent the duration of action $a$ when started at step $i$. For every step $i \in \{ 1, \ldots, \kappa h \}$ of the platform trace, we use the variable $l_i$ to denote the location of $\mathcal{T}$ at step $i$; for every clock $c$, we use the variable $c_i$ to represent the value of clock $c$ at step $i$ (the value of the global clock $\gamma$ at step $i$ is $\gamma_i$); finally, for every action $a$, we use the Boolean variable $\tau_i^{a_\vdash}$ (respectively $\tau_i^{a_\dashv}$) to denote whether a transition with label $\tau^{a_\vdash}$ (respectively $\tau^{a_\dashv}$) will be taken by $\mathcal{T}$ at step $i$.

We will use the notation $\vec{t}$ to denote the set of variables of the form $t_i$, and analogously for $\vec{a}$, $\vec{d}$,
$\vec{l}$ and $\vec{c}$.

The formula $\Phi_h$ represents time-triggered plans of length up
to $h$ that satisfy \cref{def:platform_planning}: the plan must be a valid solution for $\Pi$ ($\textsc{PlanValid}_\Pi$); for all possible traces of $\mathcal{T}$, the plan must be executable, i.e. for all plan prefixes $i$ from $0$ to $h-1$, if a trace of $\mathcal{T}$ is valid ($\textsc{TraceValid}_\mathcal{T}$) and all the snap actions up to $i$ have been applied ($\textsc{compliant}_\mathcal{T}$), then the $(i+1)$-th snap action will be applicable at the prescribed time ($\textsc{applicable}_\mathcal{T}$); finally, for all possible traces of $\mathcal{T}$, all the states of $\mathcal{T}$ that can be visited by executing the plan do not intersect the set of bad states $B$, i.e. if a trace of $\mathcal{T}$ is valid and all the snap actions of the plan have been applied, then the safety property is satisfied ($\textsc{safety}_\mathcal{T}$).

The formula $\textsc{PlanValid}_\Pi$ is a standard bounded encoding of temporal planning \cite{shin2005processes}, which we omit for the sake of brevity. Similarly, the formula $\textsc{TraceValid}_\mathcal{T}$ is a standard unrolling of the transition relation of the timed automaton $\mathcal{T}$ up to step $\kappa h$, where we denote with $\textsc{init}_\mathcal{T}$ the formula for the initial state of $\mathcal{T}$, and with $\textsc{trans}_\mathcal{T}$ the transition relation of $\mathcal{T}$.

The formula $\textsc{compliant}_\mathcal{T}$ encodes the fact that a platform trace applies all the snap actions of a plan up to step $h$, and that no transition corresponding to a snap action is triggered at the wrong time or without the action being present in the plan (it characterizes the traces that appear in the definition of $\text{Reachable}_\mathcal{T}$): when an action $a$ is started at step $i$ ($a_i$), there exists a step $j$ in the platform trace where a transition with the corresponding label is triggered ($\tau_j^{a_\vdash}$), the value of the global clock corresponds to the time at which $a$ is started ($\gamma_j = t_i$), and there are no multiple occurrences of the corresponding label at the same time; the same applies for actions ending at step $i$, which were started at a previous step $s$ ($a_s \land t_s + d_s^a = t_i$); if a transition with label $a_\vdash$ is taken at step $i$ ($\tau_i^{a_\vdash}$), then there must exist a step $j$ in the plan trace at which $a$ is started and the times are the same ($a_j \land t_j = \gamma_i$), and similarly for transitions with label $a_\dashv$.

The applicability of snap actions is encoded by
$\textsc{applicable}_\mathcal{T}$. If the action $a$ is started at step $h$, then for all the steps $i$ in the platform trace where the
value of the global clock corresponds to the time at which $a$ is
started and the corresponding transition has not already been taken in a previous step $j$ at the current time ($\gamma_j < t_h \lor \neg \tau_j^{a_\vdash}$), there must exist a transition from the current location
$l_i$ with label $\tau^{a_\vdash}$ that is enabled ($\textsc{enabled}$ encodes the fact that the guard is true under the current clock evaluation and that the invariant of the reached location is true after the necessary clocks are reset). The applicability of ends is handled similarly.

Finally, the formula $\textsc{safety}_\mathcal{T}$ states that for all the steps in the platform trace that occur before the end of the plan ($\gamma_i \leq t_h$), the current state is not included in the set of bad states $B$ ($\textsc{BAD}$ encodes the set $B \subseteq \mathcal{L} \times \mathbb{R}^\mathcal{X}$).

The overall procedure ($\textsc{PAMP-Enc}$) builds the formulae $\Phi_h$ for increasing bounds $h$ and checks them with an SMT solver: if it returns UNSAT, then there is no safe and executable plan within bound $h$ and the bound is increased; if a model is returned, it corresponds to a solution to the PAMP problem, as it satisfies the planning constraints and the executability and safety properties for all platform traces.

We now show the soundness and completeness of the approach. Here we provide proof sketches for the theorems, while the full details are included in the additional material.

\begin{thm}[Soundness and completeness of encoding-based algorithm]\label{thm1}
For every PAMP problem $\Upsilon = \tuple{\Pi, \mathcal{T}, B}$ and every bound $\kappa$:
\begin{enumerate}
    \item if $\Call{PAMP-Enc}{\Pi, \mathcal{T}, B, \kappa}$ terminates and returns plan $\pi$, then $\pi$ is a valid solution for $\Upsilon$ (soundness);
    \item if there exists a solution for $\Upsilon$, then $\Call{PAMP-Enc}{\Pi, \mathcal{T}, B, \kappa}$ will eventually terminate and return a solution for $\Upsilon$ (completeness).
\end{enumerate}
\end{thm}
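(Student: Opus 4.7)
The plan is to prove the two directions separately, in each case reducing the statement to a correspondence between models of $\Phi_h$ and the objects defined in \cref{def:reachable,def:executability,def:safety,def:platform_planning}. Throughout, I rely on the assumption on $\kappa$: any platform trace that is compliant with a plan of length $L$ has at most $\kappa L$ discrete steps, so truncating the second trace to length $\kappa h$ in the encoding is without loss of generality. I also rely on the known correctness of the temporal-planning sub-encoding $\textsc{PlanValid}_\Pi$ (a classical bounded encoding à la Shin--Davis), which I treat as a black box giving the equivalence between satisfying assignments to $\vec t, \vec a, \vec d$ and time-triggered plans that are valid solutions of $\Pi$.

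For \textbf{soundness}, suppose $\textsc{PAMP-Enc}$ terminates at bound $h$ with model $\mu$, and read off $\pi$ from $\mu(\vec t, \vec a, \vec d)$. The conjunct $\textsc{PlanValid}_\Pi$ immediately gives condition (i) of \cref{def:platform_planning}. For (ii), fix $i \in \{0, \ldots, |\pi|-1\}$ and an arbitrary state $r = \tuple{l, u} \in \text{ReachableAfter}_\mathcal{T}(r_0, \rho^\pi_i)$ with $u(\gamma) = t_{i+1}$; \cref{def:reachable} provides a TA run and injective function $H$ witnessing reachability. By the bound on $\kappa$, this run has at most $\kappa h$ steps, so it can be padded (e.g.\ with a final stutter) to an assignment $\nu$ to $\vec l, \vec c$ of length exactly $\kappa h$. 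The key lemma is that such a $\nu$ satisfies $\textsc{TraceValid}_\mathcal{T} \land \textsc{compliant}_\mathcal{T}(\ldots, i, \kappa)$: validity follows because the padded sequence is still a run, and compliance follows by setting $\tau_j^{a_\vdash/\dashv}$ exactly on the steps in $\text{Im}(H)$ with the matching label, which makes both implications in $\textsc{compliant}_\mathcal{T}$ hold. The universal clause of $\Phi_h$ then forces $\textsc{applicable}_\mathcal{T}$ for step $i+1$, which unpacks to the existence of an enabled transition labelled $\tau^{e_{i+1}}$ from $r$, i.e.\ executability in the sense of \cref{def:executability}. The symmetric argument applied to $\text{Reachable}_\mathcal{T}(r_0, \rho^\pi)$ together with $\textsc{safety}_\mathcal{T}$ gives condition (iii).

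For \textbf{completeness}, assume $\Upsilon$ has a solution $\pi^\star$ of length $L$ and consider the iteration with $h = L$. Set $\mu(\vec t, \vec a, \vec d)$ from $\pi^\star$ in the standard way; condition (i) of \cref{def:platform_planning} yields $\textsc{PlanValid}_\Pi(\mu)$. For the universal part, take any $\nu$ over $\vec l, \vec c$. If $\nu$ falsifies $\textsc{TraceValid}_\mathcal{T}$ or $\textsc{compliant}_\mathcal{T}$, the implication is vacuous. Otherwise the converse of the lemma above shows that $\nu$ encodes a genuine TA run together with an $H$-function whose image marks exactly the transitions with labels $\tau^{a_\vdash/\dashv}$, so each state visited at step $j$ lies in $\text{Reachable}_\mathcal{T}(r_0, \rho^{\pi^\star})$, and the state at the step corresponding to $t_{i+1}$ lies in $\text{ReachableAfter}_\mathcal{T}(r_0, \rho^{\pi^\star}_i)$. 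Conditions (ii) and (iii) for $\pi^\star$ then give exactly what $\textsc{applicable}_\mathcal{T}$ and $\textsc{safety}_\mathcal{T}$ require, so the implication holds. Hence $\Phi_L$ is satisfiable; the SMT solver returns some model, and by the soundness direction the resulting plan is a valid solution (not necessarily $\pi^\star$ itself, but that is enough).

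The main obstacle I expect is the characterization lemma linking satisfying assignments to $\textsc{TraceValid}_\mathcal{T} \land \textsc{compliant}_\mathcal{T}$ with pairs (run, $H$) from \cref{def:reachable}. Two subtleties make it less routine than it looks: first, the super-dense semantics of TAs allows several discrete steps at the same timestamp, whereas the encoding linearly orders $\kappa h$ steps, so one has to check that the ``no duplicate label at the same time'' clauses in $\textsc{compliant}_\mathcal{T}$ correctly match the uniqueness and monotonicity conditions on $H$ and the assumption of no simultaneous plan events; second, the direction from runs to assignments requires padding short runs up to length $\kappa h$ without breaking $\textsc{compliant}_\mathcal{T}$ or the ``transition already taken'' guards in $\textsc{applicable}_\mathcal{T}$, which is where the bound on $\kappa$ is essential.
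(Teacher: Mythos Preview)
Your proposal is correct and follows essentially the same approach as the paper's proof sketch: both directions hinge on the correspondence between assignments satisfying $\textsc{TraceValid}_\mathcal{T}\land\textsc{compliant}_\mathcal{T}$ and runs of $\mathcal{T}$ equipped with an $H$-function as in \cref{def:reachable}, with the $\kappa$ bound ensuring that restricting to traces of length $\kappa h$ loses nothing. Your explicit identification of the characterization lemma and of the padding and super-density subtleties is, if anything, more detailed than the paper's sketch, which defers these points to additional material.
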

\begin{proof}
(Sketch)
\begin{enumerate}
    \item Suppose that the procedure returns plan $\pi$ at step $h$. Let $\mu$ be the model of $\Phi_h$ from which $\pi$ was extracted. We need to prove that $\pi$ satisfies \cref{def:platform_planning}:
    \begin{enumerate}
        \item First, we show that $\pi$ is a valid solution
          plan for $\Pi$. $\mu$ satisfies $\textsc{PlanValid}_\Pi(\vec{t}, \vec{a},
          \vec{d}, h)$, which is a standard bounded encoding of the
          temporal planning problem $\Pi$, hence the plan $\pi$,
          which is extracted from $\mu$, is a solution plan for $\Pi$
          of length up to $h$.
        \item Second, we need to prove that $\pi$ is executable on $\mathcal{T}$, i.e. it satisfies the requirements of \cref{def:executability}. Let $r_0 = \tuple{l_0, u_0}$ be the initial state of $\mathcal{T}$ and let $\rho^\pi = \tuple{(t_1, e_1), \ldots, (t_n, e_n)}$ be the ordered sequence of timed snap actions of $\pi$. Consider a prefix $i \in \{ 0, \ldots, n-1 \}$ of $\rho^\pi$ and a state $r = \tuple{l, u} \in \text{ReachableAfter}_\mathcal{T}(r_0, \rho_i^\pi)$ such that $u(\gamma) = t_{i+1}$ and $r$ is reachable within $\kappa h$ steps starting from $r_0$, i.e. there exists a run $r_0 \xrightarrow{d_1} \xrightarrow{\sigma_1} \ldots \xrightarrow{d_k} \xrightarrow{\sigma_k} r_k \equiv r$ with $k \leq \kappa h$.
        Then, it can be shown that $e_{i+1}$ is applicable in $r$: the run reaching $r$ together with the model $\mu$ satisfy $\textsc{TraceValid}_\mathcal{T}(\vec{l}, \vec{c}, h, \kappa)$ and $\textsc{Compliant}_\mathcal{T}(\vec{t}, \vec{a}, \vec{d}, \vec{l}, \vec{c}, i, \kappa)$; from the encoding $\Phi_h$ this implies that $\textsc{Applicable}_\mathcal{T}(\vec{t}, \vec{a}, \vec{d}, \vec{l}, \vec{c}, i+1, \kappa)$ is satisfied, and this implies that $e_{i+1}$ is applicable in $r$ (full details in the additional material). Therefore, by \cref{def:executability}, $\pi$ is executable on $\mathcal{T}$.
        \item Third, we need to prove that $\pi$ is $B$-safe w.r.t. to $\mathcal{T}$, i.e. it satisfies the requirements of \cref{def:safety}. Let $r_0 = \tuple{l_0, u_0}$ be the initial state of $\mathcal{T}$ and let $\rho^\pi = \tuple{(t_1, e_1), \ldots, (t_n, e_n)}$ be the ordered sequence of timed snap actions of $\pi$. Consider a state $r = \tuple{l, u} \in \text{Reachable}_\mathcal{T}(r_0, \rho^\pi)$ such that $u(\gamma) \leq t_n$ and $r$ is reachable within $\kappa h$ steps starting from $r_0$, i.e. there exists a run $r_0 \xrightarrow{d_1} \xrightarrow{\sigma_1} \ldots \xrightarrow{d_k} \xrightarrow{\sigma_k} r_k$, with $k \leq \kappa h$ and $r = r_i$ for some $i \in \{ 0, \ldots, k \}$.
        Then, it can be shown that $r \not\in B$: the run reaching $r$ together with the model $\mu$ satisfy $\textsc{TraceValid}_\mathcal{T}(\vec{l}, \vec{c}, h, \kappa)$ and $\textsc{Compliant}_\mathcal{T}(\vec{t}, \vec{a}, \vec{d}, \vec{l}, \vec{c}, h, \kappa)$; from the encoding $\Phi_h$ this implies that $\textsc{Safety}_\mathcal{T}(\vec{l}, \vec{c}, B, h, \kappa)$ is satisfied, and this implies that $r \not\in B$. %Therefore, by \cref{def:safety}, $\pi$ is $B$-safe w.r.t. to $\mathcal{T}$.
    \end{enumerate}
    \item Let $\pi$ be the solution plan for $\Upsilon$ that exists by assumption. Since $\textsc{PlanValid}_\Pi(\vec{t}, \vec{a}, \vec{d}, h)$ is a standard bounded encoding of the temporal planning problem $\Pi$ with completeness guarantees, there exists a step $h$ for which there is a model $\mu \models \textsc{PlanValid}_\Pi(\vec{t}, \vec{a}, \vec{d}, h)$ such that $\pi$ can be extracted from it. We can then show that $\mu \models \Phi_h$, which implies that $\Call{PAMP-Enc}{\Pi, \mathcal{T}, B, \kappa}$ terminates at a step lower or equal than $h$ (because $\mu \models \Phi_h$).
    \qedhere
\end{enumerate}
\end{proof}

% \begin{algorithm}[tb]
%     \begin{algorithmic}[1]
%         \Procedure{PAMP-Enc}{$\Pi$, $\mathcal{T}$, $B$, $\kappa$}
%             \For{$h = 1$ to $\infty$}
%                 \State $\Phi_{h} \gets \Call{Encode}{\Pi, \mathcal{T}, B, h, \kappa}$
%                 \State $\mu \gets \Call{Solve}{\Phi_{h}}$
%                 \If{$\mu \neq \text{UNSAT}$}
%                     \State \Return \Call{ExtractPlan}{$\mu$}
%                 \EndIf
%             \EndFor
%         \EndProcedure
%     \end{algorithmic}
%     \caption{\label{alg:sol1}Encoding-based algorithm}
% \end{algorithm}

\paragraph{Abstraction-refinement approach}

In our second approach, based on an abstraction-refinement loop, we consider the planning problem and the validation problem separately: a temporal planner generates solution plans considering only the planning problem, and then the produced candidate plans are checked for executability and safety at the platform layer. Since we are considering time explicitly, it is not feasible to exclude single time-triggered plans at each validation check, as the planner, that is not aware of the platform constraints, can in most cases just slightly change the timing of the actions and the same problem would occur at the platform layer. Instead, at each failed validation check we want to exclude classes of plans, by determining that a certain sequence of discrete choices is infeasible, for any possible scheduling of the chosen snap actions.

For solving the planning problem, we rely on the $\textsc{Tamer}$
temporal planner \cite{aaai20}, which is a sound and complete approach
for temporal planning that is able to return plans expressed as Simple
Temporal Networks (STN) \cite{stn}: a returned solution $\pi_\text{STN}$
is characterized by a fixed ordering of snap actions $e_1, \ldots, e_n
\leftarrow \Call{Path}{\pi_\text{STN}}$, with each snap action $e_i$ associated to a symbolic time $t_i$. The times are ordered increasingly, with additional constraints between pairs of start/end snap actions representing the duration constraints of the corresponding durative actions. The planning algorithm implements an explicit-state heuristic-search approach, that works by exploring all the possible ordered sequences of snap actions, and updating in each state a STN whenever a new snap action is added to the sequence. If the set of STN constraints of a state becomes infeasible, then it can be pruned, as it means that the chosen sequence of discrete events cannot be scheduled while respecting the temporal constraints of the problem. If a goal state is reached, then all the time-triggered plans that satisfy the STN constraints of that state are valid solution plans, and a specific solution can be extracted by solving the constraints.

The main idea of our approach is to validate on the platform the set of STN constraints $\pi_\text{STN}$ produced by the planner, using an encoding similar to the one of the previous approach (\cref{fig:enc}). If there exists a solution to the STN constraints, that satisfies the executability and safety notions for all the platform traces, then it corresponds to an answer to the PAMP problem. Otherwise, we determine the shortest prefix $e_1, \ldots, e_i$ of the sequence of snap actions $\Call{Path}{\pi_\text{STN}}$, such that by considering only the STN constraints of $e_1, \ldots, e_i$, there does not exist a way to schedule them while guaranteeing executability and safety for all platform traces. If such a prefix is found, it can be learned by the planner, and all the states that are found during exploration whose path starts with a learned prefix can be pruned.

The overall procedure is detailed in \cref{alg:sol2}. The planning problem $\Pi$ is solved, and we obtain a set of solution plans $\pi_\text{STN}$, characterized by a fixed order of snap actions $e_1, \ldots, e_n \leftarrow \Call{Path}{\pi_\text{STN}}$, together with a set of temporal constraints between their associated times $t_1, \ldots, t_n$. The solution $\pi_\text{STN}$ is then passed to $\textsc{Check}$, together with the set of bad states $B$. The $\textsc{Check}$ procedure iterates over all the prefixes $i \in \{ 1, \ldots, n \}$, and builds the formula $\psi_\pi^i$, which is the subset of constraints of $\pi_\text{STN}$ considering only $t_1, \ldots, t_i$ ($[\pi_\text{STN}]_i$ is the conjunction of all the constraints containing $t_i$ and one of the times in $\{ t_1, \ldots, t_{i-1} \}$). The constraints $\psi_\pi^i$ are then used to produce the formula $\Phi_i$, which is an encoding of the PAMP problem, considering only candidate plans represented by $\psi_\pi^i$: in the formula of \cref{fig:enc}, $\textsc{PlanValid}_\Pi$ is replaced with $\psi_\pi^i$, while the forall formula is simplified considering the specific discrete choices $e_1, \ldots, e_i$ that are made by each plan represented by $\psi_\pi^i$ (for each step $j \in \{ 1, \ldots, i \}$, the truth value of all the variables $a_i$ is known and can be substituted in the formula). The formula $\Phi_i$ is then provided to an SMT solver: if it is unsatisfiable, then we can deduce that the prefix $e_1, \ldots, e_i$ is not valid for the platform, for any possible scheduling of the snap actions that respects the planning constraints, and therefore this path can be ``learned'' by the planner and used for pruning; if it is satisfiable, then the next prefix can be considered, and if the whole plan was being considered then a final solution can be extracted from such a model.

\begin{thm}[Soundness and completeness of abstraction-refinement algorithm]
For every PAMP problem $\Upsilon = \tuple{\Pi, \mathcal{T}, B}$ and every bound $\kappa$:
\begin{enumerate}
    \item if $\Call{PAMP-Ref}{\Pi, \mathcal{T}, B, \kappa}$ terminates and returns plan $\pi$, then $\pi$ is a valid solution for $\Upsilon$;
    \item if there exists a solution for $\Upsilon$, then $\Call{PAMP-Ref}{\Pi, \mathcal{T}, B, \kappa}$ will eventually terminate and return a solution for $\Upsilon$.
\end{enumerate}
\end{thm}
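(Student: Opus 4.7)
The plan is to mirror the structure of the proof of Theorem~1, adapting it to the abstraction-refinement loop. For soundness, when $\Call{PAMP-Ref}{\Pi, \mathcal{T}, B, \kappa}$ returns a plan $\pi$, the plan is extracted from a model $\mu$ of some $\Phi_n$ constructed inside $\Call{Check}{}$ at the full prefix $i = n$. Since $\Phi_n$ replaces $\textsc{PlanValid}_\Pi$ with the STN constraints $\psi_\pi^n$ produced by \textsc{Tamer}, and \textsc{Tamer} is sound (any schedule consistent with the returned STN is a valid time-triggered plan for $\Pi$), condition~(i) of Definition~6 follows. Conditions~(ii) and~(iii) then follow verbatim from items~(1.b) and~(1.c) of the proof of Theorem~1, because the subformulas $\textsc{TraceValid}_\mathcal{T}$, $\textsc{Compliant}_\mathcal{T}$, $\textsc{Applicable}_\mathcal{T}$, and $\textsc{Safety}_\mathcal{T}$ appearing inside $\Phi_n$ coincide with those of the amalgamated encoding.

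For completeness, I would first establish the soundness of the learning mechanism as a lemma: if the prefix $e_1, \ldots, e_i$ is learned because $\Phi_i$ is UNSAT, then no solution of $\Upsilon$ can start with exactly this sequence of snap actions. The proof rests on two observations. First, $\psi_\pi^i$ is a function only of the prefix $e_1, \ldots, e_i$ and of $\Pi$: the STN constraints between $t_1, \ldots, t_i$ are fixed by the ordering and by the duration bounds of the corresponding durative actions, so $\Phi_i$ is the same regardless of which particular candidate $\pi_\text{STN}$ carrying that prefix triggered $\Call{Check}{}$. Second, if a PAMP solution $\pi^*$ begins with $e_1, \ldots, e_i$, then the restriction of $\pi^*$ to its first $i$ timed events is itself $B$-safe and executable on $\mathcal{T}$ (safety up to $t_n$ entails safety up to $t_i$, and the executability obligation is checked pointwise at each snap action), so this restriction would witness the satisfiability of $\Phi_i$, contradicting the UNSAT result.

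Granted this lemma, assume a solution $\pi^*$ exists. By \textsc{Tamer}'s completeness, some STN solution $\pi^*_\text{STN}$ whose solution set contains $\pi^*$ lies in \textsc{Tamer}'s search tree, and by the lemma no prefix along the path to that STN can ever be pruned by learning. Hence \textsc{Tamer} eventually produces $\pi^*_\text{STN}$, at which point the corresponding $\Call{Check}{}$ call finds $\Phi_n$ satisfiable (with $\pi^*$ itself inducing a model) and returns a valid solution. Termination of the overall loop follows by combining termination of each bounded SMT query with the fact that, for any fixed plan-length bound, there are only finitely many distinct prefixes that can be learned; thus either a solution is returned or the search space at the current bound is exhausted and \textsc{Tamer}'s standard bound-increasing strategy takes over.

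The main obstacle is the learning-soundness lemma and, specifically, the claim that $\psi_\pi^i$ is truly prefix-determined and that the prefix of a global solution serves as a witness within the horizon of $\Phi_i$. One must carefully verify that the platform traces of length at most $\kappa\!\cdot\!i$ quantified by $\Phi_i$ are exactly those against which safety and executability of the prefix of $\pi^*$ must be checked, so that $\pi^*$'s compliance on the full horizon projects down to $\Phi_i$-compliance on the prefix horizon. The rest of the argument is a fairly mechanical adaptation of the proof of Theorem~1.
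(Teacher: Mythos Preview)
Your proposal is correct and follows essentially the same approach as the paper's proof sketch: soundness via \textsc{Tamer}'s soundness plus the reuse of the $\forall$-subformula argument from Theorem~1, and completeness via \textsc{Tamer}'s completeness plus the observation that learned prefixes cannot begin any solution. The paper compresses your learning-soundness lemma into the single clause ``the excluded prefixes do not satisfy the $\Phi_h$ encoding,'' whereas you spell out explicitly why $\psi_\pi^i$ is prefix-determined and why a global solution's prefix would witness satisfiability of $\Phi_i$; this extra care is useful and does not diverge from the paper's line of argument.
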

\begin{proof}
(Sketch)
\begin{enumerate}
    \item Suppose that the procedure returns plan $\pi$. $\pi$ is a valid solution to the planning problem $\Pi$, because of the soundness of the $\textsc{Tamer}$ planner \cite{aaai20} and the correctness of the encoding of the STN constraints (that replace the formula $\textsc{PlanValid}_\Pi(\vec{t}, \vec{a}, \vec{d}, h)$ in the $\Phi_h$ encoding). The plan then satisfies an analogous encoding of the forall subformula of $\Phi_h$ (which is simplied taking into account the specific action choices made by $\textsc{Tamer}$), and this guarantees the executability and safety properties (the proof follows the same reasoning of soundness in \cref{thm1}).
    \item If a solution $\pi$ to $\Upsilon$ exists, then because of the completeness of the $\textsc{Tamer}$ planner and the fact that the excluded prefixes do not satisfy the $\Phi_h$ encoding, the plan $\pi$ will be eventually returned by $\textsc{Tamer}$ (unless a different solution is found earlier for $\Upsilon$). The plan then satisfies an analogous encoding of the forall subformula of $\Phi_h$, which confirms the executability and safety properties of $\pi$ (the proof follows the soundness in \cref{thm1}). \qedhere
\end{enumerate}
\end{proof}

\begin{algorithm}[tb]
    \begin{algorithmic}[1]
        \Procedure{PAMP-Ref}{$\Pi$, $\mathcal{T}$, $B$, $\kappa$}
            \State $\text{bad\_prefixes} = \{\}$
            \While{True}
                \State $\pi_\text{STN} \leftarrow \Call{Plan}{\Pi, \text{bad\_prefixes}}$
                \State $\text{pass}, \pi \leftarrow \Call{Check}{\mathcal{T}, \pi_\text{STN}, B}$
                %\State \Comment{Returns outcome, and if fail, shortest prefix such that there exists trace induced by prefix that violates $\phi$}
                \If{pass}
                    \Return $\pi$
                \Else
                    \State $\text{bad\_prefixes} \leftarrow \text{bad\_prefixes} \cup \pi$
                \EndIf
            \EndWhile
        \EndProcedure
        \Procedure{Check}{$\mathcal{T}$, $\pi_\text{STN}$, $B$}
            \State $e_1, \ldots, e_n \leftarrow \Call{Path}{\pi_\text{STN}}$
            \State $\psi_\pi^0 \leftarrow \top$
            \For{$i = 1 \text{ to } n$}
                \State $\psi_\pi^i \leftarrow \psi_\pi^{i-1} \land \left[ \pi_\text{STN} \right]_i$
                \State $\Phi_i \leftarrow \Call{Encode}{\psi_\pi^i, \mathcal{T}, B, i, \kappa}$
                \State $\mu \leftarrow \Call{Solve}{\Phi_i}$
                \If{$\mu \text{\ is UNSAT}$}
                    \Return $\bot, (e_1, \ldots, e_i)$
                \Else
                    \If{$i == n$}
                        \Return $\top, \Call{ExtractPlan}{\mu}$
                    \EndIf
                \EndIf
                % \If{$\mu \neq \text{UNSAT}$}
                %     \If{$i == n$}
                %         \State \Return $\top, \Call{ExtractPlan}{\mu}$
                %     \EndIf
                % \Else
                %     \State \Return $\bot, (e_1, \ldots, e_i)$
                % \EndIf
            \EndFor
        \EndProcedure
    \end{algorithmic}
    \caption{\label{alg:sol2}Abstraction-refinement algorithm}
\end{algorithm}

\section{Related Work}

In this paper, we define the PAMP problem, aiming at offering \emph{guarantees} at system-level on the execution behaviors of the plan. To the best of our knowledge, this problem is novel. There is a wide literature on how to handle execution of plans in autonomous systems \cite{rosplan,bdi-roveri}, but here the point is to model the interaction between a planner and the underlying platform and construct plans that are provably safe by construction.

\citeauthor{omcare} (\citeyear{omcare}) propose a formal framework for on-board autonomy that relies on symbolic model-based reasoning, and integrates plan generation, plan execution, monitoring, fault detection identification and recovery, and run-time diagnosis functionalities. The controlled system is modeled as a finite-state non-deterministic planning problem enriched with resource estimation functions. In our case, we focus on a timed model for the system and the platform modeling is much richer than the estimation functions: we allow for generic timed automata models.

%The controlled system is modeled as a non-deterministic planning problem under partial observability with resources, represented as rational numbers. A resource estimation function is introduced which allows to separate the discrete part and the continuous part of the model: the plan generation and model validation components are only given an estimate of the value of the resources through the estimation function (lower and upper bounds), and this significantly simplifies the planning task; the execution and monitoring components instead have access to the concrete values of the resources.

\begin{figure*}[tb]
    \centering
    \begin{subfigure}[b]{0.18\textwidth}
        \centering
        \resizebox{\textwidth}{!}{
            \begin{tabular}{lrr}
                \toprule
                Algorithm & $\textsc{Bmc}$ & $\textsc{Ref}$ \\
                Domain &  &  \\
                \midrule
                Factory1-k2 & 4 & 9 \\
                Factory1-k3 & 1 & 2 \\
                Factory1-k4 & 0 & 0 \\
                Factory1-k5 & 0 & 0 \\
                Factory2-k2 & 10 & 10 \\
                Factory2-k3 & 3 & 6 \\
                Factory2-k4 & 0 & 3 \\
                Factory2-k5 & 0 & 0 \\
                Rover-k2 & 35 & 44 \\
                Rover-k3 & 18 & 25 \\
                Rover-k4 & 8 & 18 \\
                Rover-k5 & 8 & 13 \\
                Total & 87 & 130 \\
                \bottomrule
                \end{tabular}
        }
        \caption{\label{fig:coverage}}
    \end{subfigure}
    \quad
    \begin{subfigure}[b]{0.41\textwidth}
        \centering
        \resizebox{\textwidth}{!}{\input{expeval-final/plots/cactus-time-legend-all.pgf}}
        \caption{\label{fig:cactus}}
    \end{subfigure}
    \quad
        \begin{subfigure}[b]{0.33\textwidth}
        \centering
        \resizebox{\textwidth}{!}{\input{expeval-final/plots/sat-scatter-platform-bmc-platform-refinement.pgf}}
        \caption{\label{fig:scatter}}
    \end{subfigure}
    \caption{Experimental results: coverage table (\subref{fig:coverage}), cactus (\subref{fig:cactus}) and scatter (\subref{fig:scatter}) plots. The $k$ values represent the bound on platform traces w.r.t. plan lengths. $\textsc{Bmc}$ is the encoding-based approach, while $\textsc{Ref}$ is the algorithm based on abstraction-refinement.}
    \label{fig:plots}
\end{figure*}

\citeauthor{viehmann2021} (\citeyear{viehmann2021}) assume that an abstract sequential plan is given and model the platform layer as a timed automaton. A set of constraints expressed in Metric Temporal Logic (MTL) define the relationship between the two layers. The paper is limited to the problem of checking if there is one execution of the platform satisfying a given plan (it is an $\exists\exists$ quantification), while in our case, we provide a formal model for the plan generation part and tackle a plan generation problem that requires a universally-quantified validation on the platform ($\exists\forall$). Moreover, we differ on the interface between the high and low abstraction layers: we use the labels in the timed automata to model ``commands'' that are sent by the plan, while \citeauthor{viehmann2021} use MTL formulae to constrain the possible traces. Finally, we also consider executability constraints.

The problem that we address is also strongly related to conformant planning \cite{traverso-book}, but in a temporal setting and for a model of the actions given by the timed automaton in the platform level. In fact, we are looking for a plan that is capable of succeeding (in both the planning goal and the safety and executability constraints) irrespectively of the non-determinism of the platform. We are not aware of any paper concerning conformant temporal planning, but our encodings into SMT resemble the approach in \cite{cimatti-conformant} for finite-state conformant planning. A key difference with respect to conformant planning is that in our case the non-determinism originates from the platform and can involve multiple steps not visible at planning level, whereas in conformant planning one assumes that either the initial state or action effects are non-deterministic (i.e., there is a ``lockstep'' between the planning choices and the nondeterministic outcomes).
% %
% The computational challenges emerging is such an application scenario are twofold.
% %
% First, the current modeling framework for tackling automated planning and safety verification are diverse and
% %
% Second, the possible executions associated with a certain plan might be more than one, due to inherent non-determinism of the system and its operating environment (e.g. )

% In such an application scenario,  many application scenarios (most notably in robotics) the generated plans are automatically executed on the system to achieve autonomy. This schema is  simple and well-studied \cite{deliberation-survey}, but poses significant challenges for safety-critical applications, where formal guarantees on the system behaviors are pivotal requirements. The culprit of the issue lays in the fact that planners usually reason at a very high level of abstraction to achieve a sufficient scalability, representing (durative) combinations of low-level activities as "tasks" that have constraints needed to ensure plan executability and goal achievement. This representation is insufficient to fully capture the interaction of the system with the environment and the low-level non-deterministic behaviors of the system itself. On the other hand, it is not feasible to reason on the full detail of the platform at planning time due to performance requirements.

\section{Experimental Evaluation}

We developed a solver written in Python based on pySMT \cite{pysmt} implementing both the presented approaches. The solvers accept temporal planning problems written in either PDDL 2.1 or ANML, and platform models written in timed SMV \cite{magnago}, which allows to model TAs in a symbolic setting.
%
% \begin{figure}[tb]
%     \resizebox{\columnwidth}{!}{\input{images/rover_uc}}
%     \caption{\label{fig:rover}Depiction of the $\textsc{Rover}$ domain. Squares represent locations where messages need to be sent.}
% \end{figure}
%
We experimentally evaluated both approaches on two novel sets of benchmarks, $\textsc{Rover}$ and $\textsc{Factory}$, which are both available in the additional material.
In $\textsc{Rover}$, there are $n$ locations $l_0, \ldots, l_{n-1}$ connected by edges, and a robot which is initially at location $l_0$. The robot can move between consecutive locations in 1 unit of time, while moving between non-consecutive locations takes 100 units of time. The robot can also communicate at each location. The goal of the planning problem is to communicate while at certain locations, and reach location $l_{n-1}$ in the end. At the platform layer, modeled as a network of TAs, there is a task component (synchronized with the high-level communicate action) that controls a communication component: when a message is sent, the communication component moves to a \emph{standby} location if no other message is sent within 30 units of time; if this happens, the task needs to resume the component by transitioning to the \emph{resuming} location, before sending the next message. In this problem, we include a safety property by requiring that the platform never transitions to the \emph{resuming} location, to avoid consuming excessive energy for the resumption process. Therefore, solution plans will be required to only travel between consecutive locations when the first message is sent, so that the communication component does not need to be resumed. We scale the instances by increasing the number of locations, by considering all the possible combinations of locations in which to send messages, and by increasing the bound $\kappa$.% on the length of platform traces.

% \begin{figure}[tb]
%     \resizebox{\columnwidth}{!}{\input{images/rover_ta_small}}
%     \caption{\label{fig:rover_platform}Part of the platform model of the $\textsc{Rover}$ domain.}
% \end{figure}

$\textsc{Factory}$ is the same domain that we used in the running example shown in \cref{fig:factory_plans} and \cref{fig:factory_ta}. We consider two different ways of modeling the deadline: either at the planning layer (domain $\textsc{Factory1}$), by having a durative "Process" action that needs to be run in parallel with all other actions in the plan, or at the platform layer (domain $\textsc{Factory2}$), by having a component that synchronizes with the task associated with the "Work" action, and that disables the synchronization after the deadline has passed (we experiment with both versions of the problem). The instances are scaled by increasing the number of required Work actions, by considering different deadlines, and by increasing the bound $\kappa$.

% In $\textsc{Factory}$ (\cref{fig:factory}), an industrial process needs to be completed within a deadline by applying a certain number of "Work" actions. The deadline can be modeled either at the planning layer (domain $\textsc{Factory1}$), by having a durative "Process" action that needs to be run in parallel with all other actions in the plan, or at the platform layer (domain $\textsc{Factory2}$), by having a component that synchronizes with the task associated with the "Work" action, and that disables the synchronization after the deadline has passed (we experiment with both versions of the problem). At the platform layer (\cref{fig:factory_platform}), there is a task synchronized with the Work action that runs a factory component, which becomes heated after a work is performed. After a Work action is performed the component needs to cool down, and this occurs either by waiting 10 time units, or by explicitly applying a Cooldown action, which cools the component after 2 time units. If the deadline is too tight, then a plan which only contains Work actions may not be executable on the platform, because a Work action may not be applicable since the component is still heated. Therefore, depending on the deadline, a certain number of Cooldown actions may be required in the plan. The instances are scaled by increasing the number of required Work actions, by considering different deadlines, and by increasing the bound $\kappa$.

We performed all the experiments on a cluster of identical machines with AMD EPYC 7413 24-Core Processor and running Ubuntu 20.04.6. We used a timeout of 14400 seconds and a memory limit of 20GB.
The experimental results are shown in \cref{fig:plots}. We can observe that both approaches are effective at solving the tested benchmarks, with the approach based on abstraction-refinement having a wider coverage and faster solving times. This is expected, especially if the number of necessay refinement loops is low, as heuristic-search based planners are typically much faster at finding solution plans compared to encoding-based approaches, and checking the executability and safety of a STN plan is computationally much cheaper compared to combining the check with the full encoding of the planning problem. In the tested benchmarks, the number of necessary loops in the second approach ranged from 1 to 8. It is evident from the coverage table that the bound $\kappa$ on the length of the platform traces greatly influences the performance of both approaches, as platform traces are universally quantified in the encoding formula. A future direction is to try to check the executability and safety notions using an unbounded technique, possibly proving the non-existence of bad traces.

% \begin{figure}[tb]
%     \resizebox{\columnwidth}{!}{\input{images/factory_uc}}
%     \caption{\label{fig:factory}Two valid solutions for the $\textsc{Factory}$ domain. The tighter deadline of the bottom plan requires the Cool action to achieve executability.}
% \end{figure}

% \begin{figure}[tb]
%     \resizebox{\columnwidth}{!}{\input{images/factory_ta_small}}
%     \caption{\label{fig:factory_platform}Part of the platform model of the $\textsc{Factory}$ domain.}
% \end{figure}

\section{Conclusions}

In this paper, we formally defined the ``Platform-Aware Mission Planning'' problem, motivated by the need of synthesizing plans that not only achieve the mission objectives, but also ensure executability and the satisfaction of safety properties during execution. We devised an amalgamated method and a decomposition approach that can solve the problem, and showed the superiority of the latter experimentally.

As future work, we plan to generalize our model to the case of hybrid automata, allowing the representation of continuous behaviors and resources in the platform. Moreover, we are interested in other problems that can be defined in the formal framework we proposed, such as synthesizing plans that guarantee other formal properties like diagnosability.

\section*{Acknowledgments}

This work has been partly supported by the PNRR project iNEST -- Interconnected Nord-Est Innovation Ecosystem (ECS00000043) funded by the European Union NextGenerationEU program and by the STEP-RL project funded by the European Research Council (grant n. 101115870).

\bibliography{aaai25}

\begin{thebibliography}{17}
\providecommand{\natexlab}[1]{#1}

\bibitem[{Alur and Dill(1994)}]{timed_automata}
Alur, R.; and Dill, D.~L. 1994.
\newblock A Theory of Timed Automata.
\newblock \emph{Theoretical Computer Science}, 126(2): 183--235.

\bibitem[{Barrett et~al.(2009)Barrett, Sebastiani, Seshia, and Tinelli}]{smt}
Barrett, C.~W.; Sebastiani, R.; Seshia, S.~A.; and Tinelli, C. 2009.
\newblock Satisfiability Modulo Theories.
\newblock In Biere, A.; Heule, M.; van Maaren, H.; and Walsh, T., eds.,
  \emph{Handbook of Satisfiability}, volume 185 of \emph{Frontiers in
  Artificial Intelligence and Applications}, 825--885. {IOS} Press.
\newblock ISBN 978-1-58603-929-5.

\bibitem[{Bozzano, Cimatti, and Roveri(2021)}]{omcare}
Bozzano, M.; Cimatti, A.; and Roveri, M. 2021.
\newblock A Comprehensive Approach to On-Board Autonomy Verification and
  Validation.
\newblock \emph{ACM Trans. Intell. Syst. Technol.}, 12(4).

\bibitem[{Cashmore et~al.(2015)Cashmore, Fox, Long, Magazzeni, Ridder, Carrera,
  Palomeras, Hurt{\'{o}}s, and Carreras}]{rosplan}
Cashmore, M.; Fox, M.; Long, D.; Magazzeni, D.; Ridder, B.; Carrera, A.;
  Palomeras, N.; Hurt{\'{o}}s, N.; and Carreras, M. 2015.
\newblock ROSPlan: Planning in the Robot Operating System.
\newblock In Brafman, R.~I.; Domshlak, C.; Haslum, P.; and Zilberstein, S.,
  eds., \emph{Proceedings of the Twenty-Fifth International Conference on
  Automated Planning and Scheduling, {ICAPS} 2015, Jerusalem, Israel, June
  7-11, 2015}, 333--341. {AAAI} Press.

\bibitem[{Cimatti et~al.(2019)Cimatti, Griggio, Magnago, Roveri, and
  Tonetta}]{magnago}
Cimatti, A.; Griggio, A.; Magnago, E.; Roveri, M.; and Tonetta, S. 2019.
\newblock Extending nuXmv with Timed Transition Systems and Timed Temporal
  Properties.
\newblock In Dillig, I.; and Tasiran, S., eds., \emph{Computer Aided
  Verification - 31st International Conference, {CAV} 2019, New York City, NY,
  USA, July 15-18, 2019, Proceedings, Part {I}}, volume 11561 of \emph{Lecture
  Notes in Computer Science}, 376--386. Springer.

\bibitem[{Cimatti, Roveri, and Bertoli(2004)}]{cimatti-conformant}
Cimatti, A.; Roveri, M.; and Bertoli, P. 2004.
\newblock Conformant planning via symbolic model checking and heuristic search.
\newblock \emph{Artificial Intelligence}, 159(1): 127--206.

\bibitem[{Dechter, Meiri, and Pearl(1991)}]{stn}
Dechter, R.; Meiri, I.; and Pearl, J. 1991.
\newblock Temporal constraint networks.
\newblock \emph{Artificial intelligence}.

\bibitem[{Fox and Long(2003)}]{pddl21}
Fox, M.; and Long, D. 2003.
\newblock {PDDL2.1}: An extension to {PDDL} for expressing temporal planning
  domains.
\newblock \emph{Journal of artificial intelligence research}.

\bibitem[{Gario and Micheli(2015)}]{pysmt}
Gario, M.; and Micheli, A. 2015.
\newblock pySMT: a Solver-Agnostic Library for Fast Prototyping of SMT-Based
  Algorithms.
\newblock In \emph{SMT Workshop}.

\bibitem[{Gat(1998)}]{threelayerarch}
Gat, E. 1998.
\newblock On three-layer architectures.
\newblock \emph{Artificial intelligence and mobile robots}, 195: 210.

\bibitem[{Ghallab, Nau, and Traverso(2004)}]{traverso-book}
Ghallab, M.; Nau, D.~S.; and Traverso, P. 2004.
\newblock \emph{Automated planning - theory and practice}.
\newblock Elsevier.
\newblock ISBN 978-1-55860-856-6.

\bibitem[{Gigante et~al.(2022)Gigante, Micheli, Montanari, and
  Scala}]{gigante2022}
Gigante, N.; Micheli, A.; Montanari, A.; and Scala, E. 2022.
\newblock Decidability and complexity of action-based temporal planning over
  dense time.
\newblock \emph{Artif. Intell.}, 307: 103686.

\bibitem[{Gigante, Micheli, and Scala(2022)}]{gigante_kr}
Gigante, N.; Micheli, A.; and Scala, E. 2022.
\newblock On the Expressive Power of Intermediate and Conditional Effects in
  Temporal Planning.
\newblock In Kern{-}Isberner, G.; Lakemeyer, G.; and Meyer, T., eds.,
  \emph{Proceedings of the 19th International Conference on Principles of
  Knowledge Representation and Reasoning, {KR} 2022, Haifa, Israel, July 31 -
  August 5, 2022}.

\bibitem[{Shin and Davis(2005)}]{shin2005processes}
Shin, J.-A.; and Davis, E. 2005.
\newblock Processes and continuous change in a SAT-based planner.
\newblock \emph{Artificial Intelligence}.

\bibitem[{Valentini, Micheli, and Cimatti(2019)}]{aaai20}
Valentini, A.; Micheli, A.; and Cimatti, A. 2019.
\newblock Temporal Planning with Intermediate Conditions and Effects.
\newblock \emph{CoRR}, abs/1909.11581.

\bibitem[{Viehmann, Hofmann, and Lakemeyer(2021)}]{viehmann2021}
Viehmann, T.; Hofmann, T.; and Lakemeyer, G. 2021.
\newblock Transforming Robotic Plans with Timed Automata to Solve Temporal
  Platform Constraints.
\newblock In Zhou, Z.-H., ed., \emph{Proceedings of the Thirtieth International
  Joint Conference on Artificial Intelligence, {IJCAI-21}}, 2083--2089.
  International Joint Conferences on Artificial Intelligence Organization.
\newblock Main Track.

\bibitem[{Zanetti et~al.(2023)Zanetti, Moro, Vreto, Robol, Roveri, and
  Giorgini}]{bdi-roveri}
Zanetti, A.; Moro, D.~D.; Vreto, R.; Robol, M.; Roveri, M.; and Giorgini, P.
  2023.
\newblock Implementing {BDI} Continual Temporal Planning for Robotic Agents.
\newblock In \emph{{IEEE} International Conference on Web Intelligence and
  Intelligent Agent Technology, {WI-IAT} 2023, Venice, Italy, October 26-29,
  2023}, 378--382. {IEEE}.

\end{thebibliography}

\end{document}